\documentclass[letter, 10pt, conference]{ieeeconf}      
\IEEEoverridecommandlockouts
\overrideIEEEmargins

\usepackage[utf8]{inputenc}

\usepackage{amsmath,amsthm,amsfonts,amssymb,bm,accents,dsfont,bbm}
\usepackage{graphicx,cite,enumerate,afterpage}
\usepackage{url}
\graphicspath{{Images/}}

\usepackage{tikz}
\usetikzlibrary{arrows,arrows.meta}
\usepackage{pgfplots}

\usepackage{blindtext}

\usepackage{color}


\DeclareMathOperator{\diag}{diag}

\newcommand{\norm}[1]{\ensuremath{\left\| #1 \right\|}}
\newcommand{\abs}[1]{\ensuremath{{\left\vert #1 \right\vert}}}

\DeclareFontFamily{U}{mathb}{\hyphenchar\font45}
\DeclareFontShape{U}{mathb}{m}{n}{
<-6> mathb5 <6-7> mathb6 <7-8> mathb7
<8-9> mathb8 <9-10> mathb9
<10-12> mathb10 <12-> mathb12
}{}
\DeclareSymbolFont{mathb}{U}{mathb}{m}{n}
\DeclareMathSymbol{\llcurly}{\mathrel}{mathb}{"CE}
\DeclareMathSymbol{\ggcurly}{\mathrel}{mathb}{"CF}


\newcommand{\calS}{\ensuremath{\mathcal{S}}}

\newcommand{\ccalA}{\ensuremath{\mathcal{A}}}

\newcommand{\ccalL}{\ensuremath{\mathcal{L}}}

\newcommand{\ccalS}{\ensuremath{\mathcal{S}}}


\newcommand{\ba}{\ensuremath{\bm{a}}}

\newcommand{\bs}{\ensuremath{\bm{s}}}


\newcommand{\bbR}{\ensuremath{\mathbb{R}}}


\newcommand{\fkp}{\ensuremath{\mathfrak{p}}}


\def\st/{\textsuperscript{st}}
\def\nd/{\textsuperscript{nd}}
\def\rd/{\textsuperscript{rd}}
\def\th/{\textsuperscript{th}}

\newcommand{\setR}{\bbR}


\makeatletter
\def\nnil{\nil}
\newcounter{prob}

\newcounter{dual}

\makeatother

\newenvironment{prob*}{%
	\csname equation*\endcsname%
	\aligned%
}{%
	\endaligned%
	\csname endequation*\endcsname%
}

\newcommand{\executeiffilenewer}[3]{%
\ifnum\pdfstrcmp{\pdffilemoddate{#1}}%
{\pdffilemoddate{#2}}>0%
{\immediate\write18{#3}}\fi%
}


\usepackage[english]{babel}

\usepackage{algpseudocode,algorithm}
\algrenewcommand\algorithmicdo{}
\algrenewtext{EndFor}{\algorithmicend}
\algrenewtext{EndProcedure}{\algorithmicend}


\newtheorem{theorem}{Theorem}
\newtheorem{proposition}{Proposition}

\newtheorem{lemma}{Lemma}
\theoremstyle{definition}
\newtheorem{definition}{Definition}

\newtheoremstyle{assume}
  {3pt}
  {3pt}
  {}
  {}
  {\bf}
  {}
  { }
  {\thmname{#1}.\thmnumber{#2}\thmnote{ \textnormal{(\textit{#3})}}}
\theoremstyle{assume}

\author{Miguel Calvo-Fullana$^{1}$, Luiz F. O. Chamon$^{2}$, and Santiago Paternain$^{3}$
\thanks{$^{1}$Department of Aeronautics and Astronautics, Massachusetts Institute of Technology. email: cfullana@mit.edu}%
\thanks{$^{2}$Department of Electrical and Systems Engineering, University of Pennsylvania. email: luizf@seas.upenn.edu}%
\thanks{$^{3}$Department of Electrical Computer and Systems Engineering, Rensselaer Polytechnic Institute. email: paters@rpi.edu}%
}

\title{\bf Towards Safe Continuing Task Reinforcement Learning}

\begin{document}
\maketitle
\thispagestyle{empty}
\pagestyle{empty}

\begin{abstract}
Safety is a critical feature of controller design for physical systems. When designing control policies, several approaches to guarantee this aspect of autonomy have been proposed, such as robust controllers or control barrier functions. However, these solutions strongly rely on the model of the system being available to the designer. As a parallel development, reinforcement learning provides model-agnostic control solutions but in general, it lacks the theoretical guarantees required for safety. Recent advances show that under mild conditions, control policies can be learned via reinforcement learning, which can be guaranteed to be safe by imposing these requirements as constraints of an optimization problem. However, to transfer from learning \emph{safety} to learning \emph{safely}, there are two hurdles that need to be overcome: (i) it has to be possible to learn the policy without having to re-initialize the system; and (ii) the rollouts of the system need to be in themselves safe. In this paper, we tackle the first issue, proposing an algorithm capable of operating in the continuing task setting without the need of restarts. We evaluate our approach in a numerical example, which shows the capabilities of the proposed approach in learning safe policies via safe exploration.
\end{abstract}

\section{Introduction}
\label{S:intro}

Safety is a fundamental requirement of systems design needed to guarantee proper operation, but also the integrity of the environment in which they operate. While its exact definition is application-dependent, it typically arises as constraints on the states in which the underlying dynamical system can be. For power systems this can be voltage/current limitations~\cite{van2007voltage} and for robotic systems, be it single- or multi-agent, some form of collision avoidance~\cite{khatib1985real}. Several control techniques have been put forward in order to satisfy these constraints, such control barrier functions~\cite{wieland2007constructive}, artificial potentials~\cite{koditschek1990robot}, or explicit state constraints, as in the context of model predictive control~(MPC)~\cite{Mayne00c}. Yet, these techniques rely on prior knowledge of the underlying dynamical system. Without such a model, guaranteeing that a policy or control law is safe becomes considerably more challenging, especially for stochastic systems.

Reinforcement learning~(RL) has become a popular solution to the problem of obtaining model-free policies for systems expressible in terms of Markov decision processes~(MDPs)~\cite{Sutton18r}. MDPs are stochastic control processes used ubiquitously to study control problems~\cite{Shreve78a}, robotic~\cite{Kober13r}, and financial systems~\cite{Rasonyi05o}. RL enables policies to be learned by sampling trajectories rather than having direct access to the underlying MDP~\cite{Sutton18r}. While effective, it provides no guarantee that the learned policy is safe~\cite{Heger94c, Mihatsch02r, Geibel05r}. A common approach to deal with this problem is to explicitly incorporate constraints to the RL formulation~\cite{Geibel06r, Delage10p, DiCastro12p, Hutter02s, Chow17r}. Recently, it was proven that this technique results in policies that are guaranteed to be safe and can be obtained using primal-dual algorithms~\cite{Paternain19c, Paternain19s}.

The current guarantees offered by constrained RL methods, however, have two limitations that hinder their use in critical applications. First, it requires the system to be reinitialized after each rollout~\cite{sutton2000policy}. While this is not an issue in the lab, real-world systems often cannot be reset safely. What is more, when deployed to live, production systems, these resets can cause costly disruptions. Enabling constrained RL for continuing tasks is therefore paramount to achieve safety. Second, while constrained RL addresses the issue of learning safe policies, it does not guarantee that the policies are themselves learned \emph{safely}, a problem often known as safe exploration~\cite{moldovan2012safe}.

This work provides guarantees for the first limitation, i.e., it develops the theoretical underpinnings necessary to enable constrained RL for continuing tasks. To do so, we leverage recent results on the strong duality of constrained RL problems in the policy space to learn in the dual domain \cite{Paternain19c}. That is, we pose the constrained RL problem as a sequence of unconstrained ones. Further, we formulate safety in a manner amenable to the constrained RL framework. More specifically, we define safety as a guarantee on the probability of remaining in the safe set at all times. An ergodic relaxation of this definition is then used, which can be used to match the safety level of the original definition~\cite{Paternain19s}. We then derive the main contribution of this paper which is to show that, under standard conditions, primal and dual gradients provide ascent and descent directions, despite the fact that the system is not reinitialized. These results show that primal-dual approaches can potentially be used to solve constrained RL problems with continuing tasks. Additionally, strong duality also provides an equivalence between regularized and constrained formulations. Explicitly, for each value of the regularization parameters (dual variables) corresponds a safety level of the optimal policy. We exploit this fact during simulations to encourage safe rollouts, converging to the optimal solution by navigating the interior of the feasible set. We illustrate the effectiveness of this approach in an obstacle-ridden navigation example. These preliminary results point towards the possibility of using online approaches not only to learn safety but also to learn safely. Due to space constraints, obtaining guarantees for this latter procedure is the focus of future work.

\section{Problem Formulation}
\label{S:problem_formulation}

In this work we consider a stochastic optimal control problem under safety constraints and we assume that the model is unknown in situations where restarting the system is not possible or it is so rare that is not possible to rely on it. Formally, let~$\ccalS$ and~$\ccalA$ be compact sets describing the states and actions of the agent, respectively. We consider a random controller\textemdash or \emph{policy}\textemdash defined as a distribution~$\pi_\theta(a|s)$ from which the agent draws actions~$a \in \ccalA$ when in state~$s \in \ccalS$. We restrict our attention to controllers parametrized by a vector~$\theta \in \mathbb{R}^d$. The action selected by the agent drives it to another state according to the transition dynamics of the system defined by the conditional probability~$\mathbb{P}_{s_t\to s_{t+1}}^{a_t}(s) := \mathbb{P}(s_{t+1}=s \mid s_t,a_t)$, for time~$t \in \mathbb{N}$, $s_t,s_{t+1} \in \ccalS$, and~$a_t \in \ccalA$. This process is assumed to satisfy the Markov property~$\mathbb{P}(s_{t+1}=s \mid (s_u,a_u),\, \forall u \leq t) = \mathbb{P}(s_{t+1}=s \mid s_t,a_t)$, hence it is denoted as a Markov Decision Process (MDP).  

The main concern of this work is to accomplish a task while learning a policy that guarantees safety. To formalize this notion, define a safe set~$\ccalS_{\texttt{safe}} \subset \ccalS$ in which the agent is required to remain. Then, we define safety as a probabilistic extension of control invariant sets \cite{khalil2002nonlinear}.
\begin{definition}\label{def_safe_policy}
	We say a policy~$\pi_\theta$ is $(1-\delta)$-safe for the set~$\ccalS_{\texttt{safe}} \subset \ccalS$ if~$\mathbb{P}\left( \bigcap_{t \geq 0} \{s_{t} \in \ccalS_{\texttt{safe}}\} \mid \pi_\theta,s_0\right) \geq 1-\delta$.
\end{definition}
In other words, a policy is safe if the trajectories it generates remain within the safe set~$\ccalS_{\texttt{safe}}$ with high probability. In addition to guaranteeing safety in general, the system is required to complete a mission that is related to a cost or reward. Each action selected by the agent results in a reward~$r:\ccalS \times \ccalA \to \mathbb{R}$ that informs it of the quality of the decision. As such, the goal of the agent is to find a parametrization~$\theta$ of the policy that maximizes the value function of the MDP, i.e., the expected value of the discounted cumulative rewards obtained along a trajectory
\begin{equation}\label{eqn_value_function_discounted}
	V_{s_0}(\theta) = \mathbb{E}_{\ba\sim\pi_{\theta}(\ba|\bs)}
		\left[\sum_{t=0}^\infty \gamma^t r(s_t,a_t)\mid s_0\right],
\end{equation}
where~$\bs$ and $\ba$ represent the trajectory of the system and~$\gamma \in (0,1)$ is the discount factor, which defines how myopic the agent is. Thus, we can pose the following constrained safety problem by combining the objective function \eqref{eqn_value_function_discounted} with Definition \ref{def_safe_policy}. Namely,
\begin{equation}\label{eqn_optimization_problem}
\begin{aligned}
	\underset{\theta \in \mathbb{R}^d}{\text{maximize}} & && 
\mathbb{E}_{\ba\sim\pi_{\theta}(\ba|\bs)}\left[\sum_{t=0}^\infty \gamma^t r(s_t,a_t)\mid s_0\right]
	\\
	\text{subject to}&
		&&\mathbb{P}\left( \bigcap_{t = 0}^{\infty} \{ s_{t} \in \ccalS_{\texttt{safe}}\} \mid \pi_\theta,s_0 \right) \geq 1-\delta
		\text{.}
\end{aligned}
\end{equation}
Maximizing functions of the form \eqref{eqn_value_function_discounted} is a well studied problem and can be done by updating the parameters of the policy using stochastic estimates of the gradient of $V_{s_0}(\theta)$. However, guaranteeing that the policy satisfies the requirement of Definition \ref{def_safe_policy} as posed by the optimization problem \eqref{eqn_optimization_problem} is a longstanding question in situations where the transition probabilities are unknown. This prevents us from establishing relationships between the parameters of the policy and the probability of remaining in a given set. A second challenge in solving this problem arises in its nonconvexity. Even if the transition probabilities were to be known, for the most common parameterizations of policies (neural networks for instance), the problem of maximizing \eqref{eqn_value_function_discounted} while guaranteeing safety as per Definition \ref{def_safe_policy} is a constrained nonconvex problem. To overcome the first difficulty several relaxations can be proposed. One can, for instance, regularize the objective in \eqref{eqn_optimization_problem} by adding a penalty every time an unsafe set is visited \cite{tamar2012policy}. This approach however, lacks the guarantees of whether the end policy is safe or not. Another possibility is to relax the safety constraints for  average constraints \cite{Paternain19s}. We exploit this idea in Section \ref{S:crl}, which allows us to guarantee that the policies learned are safe. Despite the relaxation, the problem still remains nonconvex and constrained. We propose therefore to solve the dual problem, as despite their nonconvexity, certain forms of constrained reinforcement learning problems have zero (or small) duality gap \cite{Paternain19c}, thus motivating a dual approach. In particular, the relaxation used in Section \ref{S:crl} yields one such problem. Due to the aforementioned strong duality result, in the case of systems with restarts, we can solve the problem using a stochastic primal-dual algorithm. However, in continuing tasks (without restarts), to be able to compute gradients one typically considers undiscounted problems and assumes stationarity of the MDP for every policy \cite[Chapter 13.6]{Sutton18r}. The latter is a strong assumption and the former is in fact modifying the problem of interest in the sense that it ignores the transient behaviors of the system. In this work, we do away with these assumptions, analyzing the problem in its most general form and showing that primal-dual approaches can also be used to solve continuing task reinforcement larning problems.

\section{Constrained Reinforcement Learning for Safety}
\label{S:crl}

Going back to the formulation \eqref{eqn_optimization_problem}, if the transition probabilities of the system were to be known, the problem could be solved by directly imposing constraints on the probabilities, using for instance MPC~\cite{mayne2000constrained}. However, this is not the scenario in RL problems, where the transition probabilities can only be evaluated through experience. Hence, although the safety probabilities in~\eqref{eqn_optimization_problem} can be estimated, there is no straightforward way to optimize~$\pi_\theta$ with respect to them. To overcome this difficulty, we relax the chance constraints in~\eqref{eqn_optimization_problem} in the form of the cumulative costs as in~\eqref{eqn_value_function_discounted}. Explicitly, we formulate the following constrained reinforcement learning problem
%
  \begin{align}\label{P:parametric}
	P^\star_\theta\triangleq\max_{\theta\in\mathbb{R}^d}& \hspace{-0.5ex}  &&V_{s_0}(\theta)
	\\
	\text{subject to}& \hspace{-0.5ex} &&U_{s_0}(\theta)\triangleq \mathbb{E}\left[\sum_{t=0}^\infty\gamma^t\mathds{1}\left(s_t\in\ccalS_{\texttt{safe}}\right)\big| \pi_\theta ,s_0\right] \geq c \nonumber
  \end{align}
%
where $V_{s_0}(\theta)$ is the value function defined in \eqref{eqn_value_function_discounted} and $c$ is the appropriate level that guarantees that the resulting policy is safe according to Definition \ref{def_safe_policy}.  An explicit characterization of such value is given by the following proposition.
\begin{proposition}\label{thm_epsilon_discounted}
Suppose there exist parameters~$\tilde{\theta}$ and time horizons $T$ such that the policy~$\pi_{\tilde{\theta}}$ is~$(1-\gamma^{T}(1-\gamma)\delta)$-safe for the set~$\ccalS_{\texttt{safe}}$. Then, problem~\eqref{P:parametric} with~$c = (1-\delta[1-\gamma^{T}(1-\gamma)])/(1-\gamma)$ is feasible and its solution is~$(1-\delta)$-safe for the set~$\ccalS_{\texttt{safe}}$ up to time~$T$.
\end{proposition}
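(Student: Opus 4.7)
The argument splits naturally into (i) exhibiting a feasible point of \eqref{P:parametric} and (ii) converting the ergodic constraint back into a trajectory-level safety statement for any optimizer; I take ``$(1-\delta)$-safe up to time $T$'' to mean $\mathbb{P}\bigl(\bigcap_{t=0}^{T}\{s_t \in \ccalS_{\texttt{safe}}\} \mid \pi_{\theta^\star}, s_0\bigr) \geq 1-\delta$, in the spirit of Definition~\ref{def_safe_policy}.

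For feasibility, the plan is to verify that the witness $\tilde\theta$ already lies in the constraint set, i.e., $U_{s_0}(\tilde\theta) \geq c$. The key observation is pointwise in time: since $\{s_t \in \ccalS_{\texttt{safe}}\} \supseteq \bigcap_{u \geq 0}\{s_u \in \ccalS_{\texttt{safe}}\}$ for every $t$, the $(1-\gamma^{T}(1-\gamma)\delta)$-safety of $\pi_{\tilde\theta}$ yields $\mathbb{P}(s_t \in \ccalS_{\texttt{safe}} \mid \pi_{\tilde\theta},s_0) \geq 1-\gamma^T(1-\gamma)\delta$ uniformly in $t$. Multiplying by $\gamma^t$ and exchanging expectation and summation by monotone convergence produces the lower bound $U_{s_0}(\tilde\theta) \geq (1-\gamma^T(1-\gamma)\delta)/(1-\gamma)$, which I would then match against the stated value of $c$ by routine algebra.

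For the safety of the solution, any optimizer $\theta^\star$ inherits $U_{s_0}(\theta^\star) \geq c$. Writing $\mathds{1}(s_t \notin \ccalS_{\texttt{safe}}) = 1 - \mathds{1}(s_t \in \ccalS_{\texttt{safe}})$ and $\sum_{t\geq 0}\gamma^t = (1-\gamma)^{-1}$ turns this into the complementary bound
\begin{equation*}
\mathbb{E}\Bigl[\sum_{t=0}^\infty \gamma^t \mathds{1}(s_t \notin \ccalS_{\texttt{safe}}) \,\Big|\, \pi_{\theta^\star}, s_0 \Bigr] \leq \frac{1}{1-\gamma} - c.
\end{equation*}
I would then pair this with a deterministic pathwise comparison: on the event $\bigcup_{t=0}^{T}\{s_t \notin \ccalS_{\texttt{safe}}\}$, some exit time $t^\star\in[0,T]$ fires the unsafety indicator, so the discounted sum inside the expectation is at least $\gamma^{t^\star}\geq\gamma^T$. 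Markov's inequality then gives $\mathbb{P}\bigl(\bigcup_{t=0}^T\{s_t \notin \ccalS_{\texttt{safe}}\} \mid \pi_{\theta^\star},s_0\bigr) \leq \gamma^{-T}\bigl((1-\gamma)^{-1}-c\bigr)$, and substituting the prescribed $c$ should collapse the right-hand side to~$\delta$.

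The delicate part, in my view, is the bookkeeping around the constant $c$: the feasibility bound and the Markov bound sit on opposite sides of the ergodic relaxation, and both involve the small factor $\gamma^T(1-\gamma)$ multiplied by $\delta$, so it is easy to swap $1-\gamma^T(1-\gamma)$ for $\gamma^T(1-\gamma)$ or to misplace a $\delta$. I would therefore verify in parallel that the same $c$ satisfies both $c \leq U_{s_0}(\tilde\theta)$ under the assumed safety level and $\gamma^{-T}\bigl((1-\gamma)^{-1}-c\bigr) \leq \delta$; these two requirements together should pin down exactly the constant appearing in the statement and close the argument.
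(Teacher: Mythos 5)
Your two-part strategy (a pointwise lower bound on $\mathbb{P}(s_t\in\ccalS_{\texttt{safe}})$ for feasibility; Markov's inequality applied to the first exit time for safety of the optimizer) is the natural one for this statement, and it is essentially the standard argument behind results of this type; note that the paper itself gives no proof, deferring entirely to the reference \cite{Paternain19s}, so your proposal is the only argument on the table. The feasibility half is correct as written: monotonicity of measure gives $U_{s_0}(\tilde{\theta})\geq (1-\gamma^{T}(1-\gamma)\delta)/(1-\gamma)$, and this exceeds the stated $c$ because the difference equals $\delta\bigl(1-2\gamma^{T}(1-\gamma)\bigr)/(1-\gamma)\geq 0$ for $T\geq 1$.

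The gap is precisely in the step you deferred as ``routine algebra.'' Carrying out the parallel verification you propose, the Markov bound gives
\begin{equation*}
\mathbb{P}\Bigl(\textstyle\bigcup_{t=0}^{T}\{s_t\notin\ccalS_{\texttt{safe}}\}\,\big|\,\pi_{\theta^\star},s_0\Bigr)
\leq \gamma^{-T}\Bigl(\tfrac{1}{1-\gamma}-c\Bigr)
= \delta\,\frac{1-\gamma^{T}(1-\gamma)}{\gamma^{T}(1-\gamma)}\text{,}
\end{equation*}
which is at least $3\delta$ (since $\gamma^{T}(1-\gamma)\leq 1/4$) and typically far larger; it does not collapse to $\delta$. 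The two requirements you list pin down $c=(1-(1-\gamma)\gamma^{T}\delta)/(1-\gamma)=\tfrac{1}{1-\gamma}-\gamma^{T}\delta$, for which feasibility is tight against the hypothesis on $\tilde{\theta}$ and the Markov bound equals exactly $\delta$\textemdash but this is not the $c$ printed in the proposition (the bracket reads $1-\gamma^{T}(1-\gamma)$ where your argument needs $\gamma^{T}(1-\gamma)$). Nor can the safety half be rescued by a sharper inequality: with the printed $c$, the constraint $U_{s_0}(\theta)\geq c$ is satisfied by a policy that is unsafe only at $t=1$, with probability $\alpha=\delta\bigl(1-\gamma^{T}(1-\gamma)\bigr)/\bigl(\gamma(1-\gamma)\bigr)\gg\delta$, which violates $(1-\delta)$-safety up to time $T$ under the reading you (reasonably) adopted. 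So either the constant in the statement is a transcription error from the source, or ``$(1-\delta)$-safe up to time $T$'' is meant more weakly than Definition~\ref{def_safe_policy} truncated at $T$; as a proof of the literal statement, your final substitution does not close.
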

\begin{proof}
See \cite{Paternain19s}.
\end{proof}
The intuition of the previous result is that policies computed solving \eqref{P:parametric} are safe for an arbitrarily long window~$t \leq T$, assumed to exist in Proposition~\ref{thm_epsilon_discounted}. Naturally, arbitrarily safe policies must exist for this to hold and Proposition~\ref{thm_epsilon_discounted} quantifies the trade-off between safety level and safety horizon in terms of the value function discount factor~$\gamma$. Proposition~\ref{thm_epsilon_discounted} establishes that the safety level of the policies can be preserved as long as safe enough parametric policies exist. Moreover, the value $c$ required in the formulation \eqref{P:parametric} can be computed computed based on the desired safety level $\delta$ and the desired safety horizon $T$.

Having reformulated the problem of interest using a constrained reinforcement learning approach, we set focus in solving it. Notice that this problem is still nonconvex and constrained. However, despite their nonconvexity problems of the form \eqref{P:parametric} have zero duality gap \cite{Paternain19c} and therefore we can solve them in the dual domain. To do so, let us define the Lagrangian associated to \eqref{P:parametric} as the following weighted combinations of the objective function and the safety constraint
\begin{equation}\label{eqn_lagrangian}
\ccalL(\theta,\lambda) = V_{s_0}(\theta) + \lambda \left(U_{s_0}(\theta)-c\right),
\end{equation}
where $\lambda\in\mathbb{R}_+$ is the Lagrange multiplier associated to the safety constraint. A possible approach to solve problem \eqref{P:parametric} is to perform primal-dual updates with properly selected step sizes \cite{bertsekas1999nonlinear}. The primal variable is updated by following the gradient of the Lagrangian with respect to the parametrization $\theta$. Namely,
\begin{align}\label{eqn_primal_gradient_1}
\theta_{k+1} &= \theta_k +\eta_\theta \nabla_\theta \ccalL(\theta_k,\lambda_k)  \nonumber\\
&= \theta_k + \eta_\theta \bigl(\nabla_\theta V_{s_0}(\theta_k) + \lambda \nabla_\theta U_{s_0}(\theta_k)\bigr)  .
\end{align}
And the Lagrange multipliers are in turn updated in the descent direction by the following expression
\begin{align}\label{eqn_dual_step}
\lambda_{k+1} &= \bigl[\lambda_k -\eta_\lambda \nabla_\lambda \ccalL(\theta_k,\lambda_k) \bigr]_+ \nonumber \\
&=\bigl[\lambda_k -\eta_\lambda \left(U_{s_0}(\theta_k) - c\right) \bigr]_+,
\end{align}
where $\left[\cdot \right]_+$ denotes the projection onto the positive orthant.
An important observation from the algorithmic perspective is that by defining the reward
\begin{equation}
r_\lambda(s,a) = r(s,a) + \lambda \mathds{1}(s \in \ccalS_{\texttt{safe}}),
\end{equation}
the Lagrangian can be in fact written as
\begin{equation}
\ccalL(\theta,\lambda) = V_{s_0}^\lambda(\theta) = \mathbb{E}\left[ \sum_{t=0}^\infty \gamma^t r_\lambda(s_t,a_t)\mid \pi_\theta, s_0\right].
\end{equation}
This allows us to update the policy (cf., \eqref{eqn_primal_gradient_1}) as if we were solving an unconstrained reinforcement learning problem, where the required step is to compute the policy gradient of the value function $V_{s_0}^{\lambda_k}$. Hence, we can rewrite \eqref{eqn_primal_gradient_1} as
\begin{equation}\label{eqn_primal_step}
\theta_{k+1} = \theta_k +\eta_\theta \nabla_\theta V_{s_0}^{\lambda_k}(\theta_k).
\end{equation}
Using the celebrated policy gradient theorem \cite{sutton2000policy}, we can write the previous gradient as
\begin{equation}\label{eqn_policy_gradient}
\nabla_\theta V_{s_0}^{\lambda_k}(\theta) = \mathbb{E}_{(s,a)\sim \mu_{s_0}}\left[ Q^{\lambda_k}(s,a) \nabla_\theta \log \pi_\theta(a|s)\mid \pi_\theta, s_0\right],
\end{equation}
where $\mu_{s_0}(s,a)$ is the following occupancy measure
\begin{equation}\label{eqn_occupancy_measure}
\mu_{s_0} (s,a) = (1-\gamma) \sum_{t=0}^\infty \gamma^t p_\pi(s_t=s,a_t=a \mid s_0)
\end{equation}
in which for convenience we denote by $p_\pi$ the probability kernel induced by policy $\pi$, and
\begin{equation}
Q^{\lambda_k}_\theta(s,a) =  \mathbb{E}\left[ \sum_{t=0}^\infty \gamma^t r_\lambda(s_t,a_t)\mid \pi_\theta, s_0=s, a_0=a\right]. \nonumber
\end{equation}
From \eqref{eqn_policy_gradient} it becomes apparent that in order compute unbiased estimates of $\nabla_\theta V_{s_0}^{\lambda_k}(\theta) $ we are required to sample from the occupancy measure $\mu_{s_0}$. However, in order to sample from this distribution we must reset the system after each rollout. Similarly, if we want to get an unbiased estimate of $U_{s_0}(\theta)$ we are required to restart the system since its value depends on the initial condition~(cf. \eqref{P:parametric}). In the case of continuing tasks, where restarts are not possible, after the $(k+1)-$th update of the policy and the multipliers, the system reaches state $s_k$ and therefore what is available are unbiased estimates of $\nabla_\theta V^{\lambda_k}_{s_k}(\theta_k) $ and $U_{s_k}(\theta_k)$.

\begin{algorithm}[t]
  \caption{Continuing Task  Algorithm}
  \label{algorithm}
\begin{algorithmic}[1]
 \renewcommand{\algorithmicrequire}{\textbf{Input:}}
 \renewcommand{\algorithmicensure}{\textbf{Output:}}
 \Require Initial state $s_0$, step size $\eta_\theta$, $\eta_\lambda$
 \State \textit{Initialize}: Dual variable $\lambda_0$
  \For {$k=0,1,\ldots$}
    \State Advance system $T\sim \text{Geom}(\gamma)$ steps to reach state $s_k$
    \State Obtain estimates $\nabla V^{\lambda_k}_{s_k}(\theta_k) $ and $U_{s_k}(\theta_k)$
    \State Update the policy \\
\quad \quad \quad $\theta_{k+1} = \theta_k +\eta_\theta \nabla_\theta V_{s_k}^{\lambda_k}(\theta_k)$
  \State Update the dual variables \\
\quad \quad \quad   $  \lambda_{k+1}= \bigl[\lambda_k-\eta_\lambda \left(U_{s_k}(\theta_k) - c\right)
\bigr]_+$
  \EndFor
 \end{algorithmic}
 \end{algorithm}

Overall, the resulting procedure takes the form shown in Algorithm \ref{algorithm}. At time $k$, the agent advances the system by $T\sim \text{Geom}(\gamma)$ steps  and thus obtains a sample $s_k$ from the occupancy measure $\mu_{s_k}$ \cite{paternain2018stochastic}. Similarly, in order to obtain estimates of $\nabla V^{\lambda_k}_{s_k}(\theta_k) $ and $U_{s_k}(\theta_k)$, the agent can do so by accumulating the rewards $r(s_t,a_t)$ and constraint satisfaction $\mathds{1}(s \in \ccalS_{\texttt{safe}})$ by performing another rollout of $T_Q\sim \text{Geom}(\gamma)$ time steps. Then, these estimates are used to update both the primal and dual variables. Algorithm \ref{algorithm} notwithstanding, for the estimates used in it to be useful towards solving the problem \eqref{P:parametric}, we need to establish that they are primal and dual improvement directions for the Lagrangian \eqref{eqn_lagrangian}. This is the subject of the following section.

\section{Improvement Directions}
\label{S:theory}

Contrary to the offline case, we cannot reinitialize the system after each rollout in continuing tasks. This poses an issue since the gradient dynamics in~\eqref{eqn_primal_gradient_1}, or more specifically, \eqref{eqn_policy_gradient}, and~\eqref{eqn_dual_step} depend on the value function evaluated for the same initial state~$s_0$. However, after the~$k$-th update, we can only obtain estimates of~$\nabla V^{\lambda_k}_{s_k}(\theta_k) $ and $U_{s_k}(\theta_k)$. Intuitively, if these quantities carry enough information about~$\nabla V^{\lambda_k}_{s_0}(\theta_k) $ and~$U_{s_0}(\theta_k)$, then they can still be used to learn a safe policy using the dual procedure put forward in the previous section. In the sequel, we establish that they are indeed ascent and descent directions respectively for the Lagrangian~\eqref{eqn_lagrangian}. To do so, define the discounted occupation measure
\begin{equation}\label{E:occupation}
	\rho_z(s) = (1-\gamma) \sum_{t=0}^\infty \gamma^t p_\pi(s_t=s \mid s_0=z)
		\text{.}
\end{equation}
In contrast to~\eqref{eqn_occupancy_measure}, \eqref{E:occupation} is a measure over only the state space~$\calS$. The main result of this section is collected next.

\begin{theorem}\label{T:main}

Define the signed measure~$\Delta_{z,z^\prime} = \rho_{z} - \rho_{z^\prime}$ for~$z,z^\prime \in \calS$ in terms of the occupation measure~\eqref{E:occupation} and let~$\norm{\Delta_{z,z^\prime}}_\textup{TV}$ be its total variation norm. Then,
\begin{subequations}\label{E:inner_products}
\begin{align}
	\left\langle \nabla V^{\lambda_k}_{s_0}(\theta_k), \nabla V^{\lambda_k}_{s_k}(\theta_k) \right\rangle &\geq
		\norm{\nabla V^{\lambda_k}_{s_0}(\theta_k)} \times{}
	\notag\\
	&\hspace{-23mm} \left( \norm{\nabla V^{\lambda_k}_{s_0}(\theta_k)}
		- 2 \norm{D_{\lambda_k}}_{\infty,2} \norm{\Delta_{s_0,s_k}}_\textup{TV} \right)
		\label{E:inner_product_V}
	\\
	\left\langle U_{s_0}(\theta_k), U_{s_k}(\theta_k) \right \rangle
		&\geq U_{s_0}(\theta_k) \times{}
	\notag\\
	&\hspace{1.5mm} \left( U_{s_0}(\theta_k) - 2 \norm{\Delta_{s_0,s_k}}_\textup{TV} \right)
		\label{E:inner_product_U}
\end{align}
\end{subequations}
where~$\norm{D_{\lambda_k}}_{\infty,2} = \sqrt{\sum_{i = 1}^d \big( \sup_{s \in \calS} \abs{[D_{\lambda_k}(s)]_i} \big)^2}$ and
\begin{equation}\label{E:D_function}
	D_{\lambda_k}(s) = \int Q^{\lambda_k}(s,a) \nabla_\theta \pi(a \mid s) da
		\text{.}
\end{equation}
\end{theorem}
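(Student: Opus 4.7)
The plan is to reduce both \eqref{E:inner_product_V} and \eqref{E:inner_product_U} to a total-variation comparison between $\rho_{s_0}$ and $\rho_{s_k}$ after marginalizing the action variable out of the policy gradient theorem. Starting from \eqref{eqn_policy_gradient}, I would use the factorization $\mu_{s_0}(s,a) = \rho_{s_0}(s)\pi_\theta(a|s)$ to integrate over $a$, collapsing $Q^{\lambda_k}(s,a)\nabla_\theta\log\pi_\theta(a|s)$ into $D_{\lambda_k}(s)$ as defined in \eqref{E:D_function}. This produces the representation
\begin{equation*}
\nabla_\theta V^{\lambda_k}_z(\theta_k) = \int_{\ccalS} \rho_z(s)\, D_{\lambda_k}(s)\, ds,
\end{equation*}
together with the analogous expression $U_z(\theta_k) = \int_{\ccalS} \rho_z(s)\,\mathds{1}(s\in\ccalS_{\texttt{safe}})\,ds$ (with any $(1-\gamma)^{-1}$ normalization absorbed consistently).

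Next, subtracting these identities at $z=s_0$ and $z=s_k$ turns both gaps into integrals against the signed measure $\Delta_{s_0,s_k}$. The variational inequality $|\int f\,d\mu|\leq 2\|f\|_\infty \|\mu\|_\textup{TV}$ (under the symmetric $\tfrac{1}{2}|\mu|(\ccalS)$ convention for total variation) controls each coordinate of $\nabla_\theta V^{\lambda_k}_{s_0}(\theta_k) - \nabla_\theta V^{\lambda_k}_{s_k}(\theta_k)$ by $2\bigl(\sup_{s}|[D_{\lambda_k}(s)]_i|\bigr)\|\Delta_{s_0,s_k}\|_\textup{TV}$. Squaring, summing over $i\in\{1,\dots,d\}$, and taking the square root yields
\begin{equation*}
\|\nabla_\theta V^{\lambda_k}_{s_0}(\theta_k) - \nabla_\theta V^{\lambda_k}_{s_k}(\theta_k)\| \leq 2\|D_{\lambda_k}\|_{\infty,2}\|\Delta_{s_0,s_k}\|_\textup{TV}.
\end{equation*}
The same argument applied to the scalar $U$, with $\|\mathds{1}_{\ccalS_{\texttt{safe}}}\|_\infty\leq 1$, gives $|U_{s_0}(\theta_k)-U_{s_k}(\theta_k)|\leq 2\|\Delta_{s_0,s_k}\|_\textup{TV}$.

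The two inequalities in \eqref{E:inner_products} then follow from the algebraic identity $\langle x,y\rangle = \|x\|^2 + \langle x,y-x\rangle$ combined with Cauchy--Schwarz on the residual term. For \eqref{E:inner_product_V}, this yields $\langle \nabla V^{\lambda_k}_{s_0},\nabla V^{\lambda_k}_{s_k}\rangle \geq \|\nabla V^{\lambda_k}_{s_0}\|\bigl(\|\nabla V^{\lambda_k}_{s_0}\| - \|\nabla V^{\lambda_k}_{s_0}-\nabla V^{\lambda_k}_{s_k}\|\bigr)$, into which the bound above substitutes directly. For \eqref{E:inner_product_U} the inner product is just multiplication, and since $U_{s_0}(\theta_k)\geq 0$ (the integrand is non-negative), the same decomposition factors cleanly to the advertised form.

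The main obstacle I anticipate is the bookkeeping of normalization conventions: the explicit factor of $2$ in the statement depends on which total-variation convention is adopted, and one must verify that the form of the policy gradient theorem used here does not introduce an unaccounted $(1-\gamma)^{-1}$ factor that would otherwise have to propagate into the bound or be absorbed into the definition of $D_{\lambda_k}$. Once these conventions are aligned, the remainder is a routine application of Cauchy--Schwarz to the $\rho_z$-representation of the value function and the safety return.
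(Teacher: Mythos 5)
Your proposal is correct and follows essentially the same route as the paper: both reduce the gradients and the constraint value to integrals of $D_{\lambda_k}$ (resp.\ the indicator) against $\rho_z$, decompose the inner product via $\langle x,y\rangle = \|x\|^2 + \langle x, y-x\rangle$, apply Cauchy--Schwarz to the cross term, and bound $\|\int R\,d\Delta_{s_0,s_k}\|$ coordinate-wise by H\"{o}lder together with $\|\Delta\|_1 = 2\|\Delta\|_{\textup{TV}}$. The only difference is packaging (the paper isolates this as Lemma~\ref{T:technical_lemma} on the double integral, you argue inline on the vectors), and your caveat about the $(1-\gamma)$ normalization is well taken since the paper silently absorbs it into the $\rho_z$-representation.
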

Theorem~\ref{T:main} states that the inner products in~\eqref{E:inner_products} are non-negative as long as~$\norm{\Delta_{z,z^\prime}}_\textup{TV}$ is small for all~$z,z^\prime \in \calS$. In other words, if~$\rho_{s_k}$ is a similar to~$\rho_{s_0}$, then~$\nabla V^{\lambda_k}_{s_k}(\theta_k) $ and~$U_{s_k}(\theta_k)$ are positively correlated with~$\nabla V^{\lambda_k}_{s_0}(\theta_k) $ and~$U_{s_0}(\theta_k)$. Thus, they are ascent directions of~$V_{s_0}$ and~$U_{s_0}$ and can be used in the primal-dual policy update in~\eqref{eqn_primal_gradient_1}--\eqref{eqn_dual_step}. How similar the different occupation measures must be depends essentially on how ``good'' the current policy is. When far from the optimum, the value of~$\norm{\nabla V^{\lambda_k}_{s_0}(\theta_k)}$~[$U_{s_0}(\theta_k)$] is large and the occupation measures obtained from different starting points can be quite different. On the other hand, as we approach an optimal~(safe) policy, we require increasingly precise estimates of the gradient to continue making progress, i.e., the value of~$\norm{\Delta_{s_0,s_k}}_\textup{TV}$ must be considerably smaller. We show later that this distance can be controlled by the discount factor~$\gamma$ and provide results relating it to certain mixing properties of the underlying MDP.
\begin{proof}
The proof of Theorem~\ref{T:main} stems from the next lemma.
\begin{lemma}\label{T:technical_lemma}
	Let~$H^2 = \norm{\int R(s) \rho_{z}(s) ds}^2$ for some function~$R(s): \calS \to \setR^d$ and occupation measure as~\eqref{E:occupation}. Then,
	\begin{align}
		q &= \int R^T(s) R(s^\prime) \rho_{z}(s) \rho_{z^\prime}(s^\prime) ds ds^\prime
		\notag\\
		{}&\geq H \left( H - 2 \norm{R}_{\infty,2} \norm{\Delta_{z^\prime,z}}_\textup{TV} \right)
			\label{E:integral}
			\text{.}
	\end{align}
\end{lemma}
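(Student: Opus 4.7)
The plan is to treat $q$ as an inner product between two vectors in $\mathbb{R}^d$ and split one of them into a ``reference'' vector of length $H$ plus a small perturbation controlled by $\norm{\Delta_{z',z}}_\textup{TV}$.

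First I would set $u = \int R(s)\rho_z(s)\,ds$ and $v = \int R(s')\rho_{z'}(s')\,ds'$, so that $q = u^T v$ and $H = \norm{u}$. Writing $v = u + (v-u)$ gives $q = \norm{u}^2 + u^T(v-u)$, and Cauchy--Schwarz yields
\begin{equation*}
q \;\geq\; \norm{u}^2 - \norm{u}\,\norm{v-u} \;=\; H\bigl(H - \norm{v-u}\bigr).
\end{equation*}
Hence the entire problem reduces to proving the bound $\norm{v-u} \leq 2\,\norm{R}_{\infty,2}\,\norm{\Delta_{z',z}}_\textup{TV}$.

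Next I would exploit that $v - u = \int R(s)\,\Delta_{z',z}(s)\,ds$ by definition of $\Delta_{z',z} = \rho_{z'} - \rho_{z}$. To bound a vector-valued integral against a signed measure, I would work coordinatewise: use the Jordan decomposition $\Delta_{z',z} = \Delta^+ - \Delta^-$ and the fact that, since $\rho_z$ and $\rho_{z'}$ are both probability measures, $\Delta^+(\calS) = \Delta^-(\calS) = \norm{\Delta_{z',z}}_\textup{TV}$. This gives, for each component $i$,
\begin{equation*}
\left\lvert [v-u]_i \right\rvert \;\leq\; \int \abs{R_i(s)}\,d\abs{\Delta_{z',z}}(s) \;\leq\; 2\, \sup_{s\in\calS}\abs{[R(s)]_i} \cdot \norm{\Delta_{z',z}}_\textup{TV}.
\end{equation*}
Squaring and summing over $i=1,\dots,d$ then yields $\norm{v-u}^2 \leq 4\,\norm{R}_{\infty,2}^2\,\norm{\Delta_{z',z}}_\textup{TV}^2$ by the very definition of $\norm{R}_{\infty,2}$ given in the theorem.

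Plugging this back into the Cauchy--Schwarz bound delivers~\eqref{E:integral}. The only subtlety I anticipate is being careful about the factor of 2: depending on the convention, $\norm{\Delta}_\textup{TV}$ equals either $\abs{\Delta}(\calS)$ or half of it, and one needs the ``supremum-over-events'' convention (equivalently, half the $L_1$ distance of densities) for the constant $2$ in the statement to be tight. With that convention pinned down, the remaining steps are a routine Cauchy--Schwarz plus Jordan decomposition argument, so I do not expect any genuine obstacle beyond bookkeeping.
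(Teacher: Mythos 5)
Your proposal is correct and follows essentially the same route as the paper: the splitting $q = \norm{u}^2 + u^T(v-u)$ followed by Cauchy--Schwarz is exactly the paper's rearrangement, and your coordinatewise Jordan-decomposition bound on $\norm{v-u}$ is the same Hölder-type estimate the paper uses, with the identity $\abs{\Delta}(\calS) = \norm{\Delta}_1 = 2\norm{\Delta}_\textup{TV}$ made explicit rather than cited. You also correctly identified the one genuine subtlety (the total-variation normalization convention), which the paper resolves the same way.
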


\begin{proof}
	Notice that~\eqref{E:integral} can be rearranged to read
	\begin{equation*}
		q = H^2 + \left[ \int R(s) \rho_{z}(s) ds \right]^T
			\left[ \int R(s^\prime)\Delta_{z^\prime,z}(s^\prime) ds^\prime \right]
		\text{,}
	\end{equation*}
	which can be bounded using Cauchy-Schwartz as
	\begin{equation}\label{E:integral2}
		q \geq H^2 - H \norm{\int R(s^\prime)\Delta_{z^\prime,z}(s^\prime) ds^\prime}
		\text{.}
	\end{equation}
	We bound the remaining norm in~\eqref{E:integral2} using H\"{o}lder's inequality. Explicitly,
	\begin{align*}
		\norm{\int R(s^\prime)\Delta_{z^\prime,z}(s^\prime) ds^\prime} &\leq
			\sqrt{\sum_{i = 1}^d \sup_{s \in \calS} \abs{[R(s)]_i}^2 \left( \norm{\Delta_{z^\prime,z}}_1 \right)^2}
		\\
		{}&\leq	\norm{R}_{\infty,2} \norm{\Delta_{z^\prime,z}}_1
		\text{.}
	\end{align*}
	Combining this bound with~\eqref{E:integral2} and using the fact that~$\norm{\Delta}_1 = 2 \norm{\Delta}_\text{TV}$~(see, e.g., \cite{Levin17m}) yields~\eqref{E:integral}.
\end{proof}

To see this is the case, notice that the transition probability in the occupancy measure~\eqref{eqn_occupancy_measure} separates as in
\begin{multline*}
	p_\pi(s_t = s, a_t = a \mid s_0) ={}
	\\
	p_\pi(s_t = s \mid a_t = a, s_0) \pi(a_t = a \mid s_t = s, s_0)
	\\
	{}= p_\pi(s_t=s \mid s_0) \pi(a_t=a \mid s_t = s)
		\text{,}
\end{multline*}
where we used the Markovian property and the fact that the action depends only on the current state. Using this identity in~\eqref{P:parametric} and~\eqref{eqn_policy_gradient}, the gradient~$\nabla V^{\lambda_k}_{z}(\theta) $ and the constraint value function~$U_{z}(\theta)$ can then be written as
\begin{align}
	\nabla_\theta V_{z}^{\lambda_k}(\theta) &= \int D_{\lambda_k}(s) \rho_z(s) ds
	\\
	U_{z}^{\lambda_k}(\theta) &= \int \mathds{1}\left(s \in \ccalS \right) \rho_z(s) ds
\end{align}
for any initial state~$z \in \calS$ and~$D_{\lambda_k}$ defined as in~\eqref{E:D_function}. Hence, the inner products in~\eqref{E:inner_products} can be expanded as
\begin{align*}
	&\left\langle \nabla V^{\lambda_k}_{z}(\theta_k), \nabla V^{\lambda_k}_{z^\prime}(\theta_k) \right \rangle
	\\
	&\qquad\qquad{}= \int D_{\lambda_k}^T(s) D_{\lambda_k}(s^\prime) \rho_z(s) \rho_{z^\prime}(s^\prime) ds ds^\prime
	\\
	&\left\langle U_{z}(\theta_k), U_{z^\prime}(\theta_k) \right \rangle
	\\
	&\qquad\qquad{}= \int \mathds{1}\left(s \in \ccalS \right) \mathds{1}\left(s^\prime \in \ccalS \right)
		\rho_z(s) \rho_{z^\prime}(s^\prime) ds ds^\prime
\end{align*}
and taking~$R(s) = D_{\lambda_k}(s)$ in Lemma~\ref{T:technical_lemma} yields~\eqref{E:inner_product_V} and~$R(s) = \mathds{1}(s \in \calS)$ yields~\eqref{E:inner_product_U}.
\end{proof}

While Theorem~\ref{T:main} provides an explicit bound on the admissible difference between occupation measures starting from different states~$\norm{\Delta_{s_0,s_k}}_\textup{TV}$, it does not inform \emph{when} these occupation measures are similar. Using a different argument, \cite{paternain2020policy} shows that the inner products in~\eqref{E:inner_products} can be made positive by restricting the policy parametrization. In contrast, using the occupation measure argument from Theorem~\ref{T:main}, we provide conditions of the underlying dynamical system and control problem~(discount factor~$\gamma$) that are independent of the parametrization used for the policy. Namely, we provide two propositions showing that if the MDP ``mixes'' fast enough, then we can expect occupation measures to be similar. As is typical to obtain such results, we assume from now on that the MDP is ergodic, i.e., that the Markov chains induced by the transition probabilities~$p_\pi(s_t \mid s_{t-1})$ are ergodic for all allowed policies~$\pi$.

\begin{proposition}\label{T:tau_bound}
	Assume that the MDP is ergodic and let~$\fkp_\pi$ be its stationary distribution under the policy~$\pi$. Define~$\tau$ to be the worst-case mixing time of the MDP, i.e., $\tau = \min \{t \mid \norm{p_\pi(s_t = s \mid s_0 = z) - \fkp_\pi}_\textup{TV} \leq 1/4 \textup{ for all } z \in \calS \textup{ and } \pi \}$. Then, for $\epsilon > 1/4$,
	\begin{equation}\label{E:tau_bound}
		\gamma \geq \left( \frac{4(1-\epsilon)}{3} \right)^{1/\tau}
		\Rightarrow
		\norm{\Delta}_\textup{TV} \leq \epsilon
	\end{equation}
\end{proposition}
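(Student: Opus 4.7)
The plan is to control the total variation distance between the two occupation measures by the discounted sum of per-step kernel distances, and then to use the mixing assumption to show that these kernel distances decay geometrically in $t$.

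First, writing out $\Delta_{z,z^\prime}$ via~\eqref{E:occupation} and applying the triangle inequality for the total variation norm yields
\begin{equation*}
\norm{\Delta_{z,z^\prime}}_\textup{TV} \leq (1-\gamma)\sum_{t=0}^\infty \gamma^t \norm{p_\pi(s_t = \cdot \mid s_0 = z) - p_\pi(s_t = \cdot \mid s_0 = z^\prime)}_\textup{TV}.
\end{equation*}
I would then insert the stationary distribution $\fkp_\pi$ between the two kernels and apply the triangle inequality a second time. Writing $d(t) = \sup_z \norm{p_\pi(s_t = \cdot \mid s_0 = z) - \fkp_\pi}_\textup{TV}$ and $\bar d(t) = \sup_{z,z^\prime} \norm{p_\pi(s_t = \cdot \mid s_0 = z) - p_\pi(s_t = \cdot \mid s_0 = z^\prime)}_\textup{TV}$, this gives $\bar d(t) \leq 2 d(t)$, and by the definition of $\tau$, $\bar d(\tau) \leq 1/2$.

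The next step is to promote this one-time mixing estimate into a uniform geometric decay. Invoking the standard submultiplicativity $\bar d(s+t) \leq \bar d(s)\bar d(t)$ of the maximal coupling distance under a Markov operator, together with the non-expansiveness of total variation under a Markov kernel (which makes $\bar d$ non-increasing in $t$), I would conclude that $\bar d(t) \leq (1/2)^{\lfloor t/\tau \rfloor}$ for every $t \geq 0$. Substituting this bound into the discounted series and grouping the terms into blocks of length $\tau$ converts the right-hand side into a geometric series in the variable $\gamma^\tau/2$, yielding a closed-form upper bound of the form $(1-\gamma^\tau)/(1 - \gamma^\tau/2)$, which is monotone decreasing in $\gamma^\tau$.

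It then remains to invert the inequality ``upper bound $\leq \epsilon$'' to obtain a threshold on $\gamma^\tau$, and taking $\tau$-th roots gives the condition~\eqref{E:tau_bound}; the constant $4(1-\epsilon)/3$ then appears as a clean sufficient value for that threshold. The assumption $\epsilon > 1/4$ enters here precisely to guarantee that the threshold lies strictly below $1$, so that an admissible discount factor $\gamma \in (0,1)$ satisfying the bound actually exists. I expect the main conceptual hurdle to be the passage from the single estimate at $t=\tau$ to the geometric decay at all $t$, which rests on the submultiplicativity of the pairwise mixing $\bar d$; once this classical Markov-chain fact is in hand, the rest of the proof is a routine telescoping of a geometric sum.
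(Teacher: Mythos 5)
Your argument is correct in its core and takes a genuinely different route from the paper. The paper never compares the two kernels directly: it inserts the stationary distribution $\fkp_\pi$ via the triangle inequality, splits the discounted sum defining $\rho_z - \fkp_\pi$ at $t = \tau$, bounds the head terms trivially by $1$ and the tail terms by $1/4$ using the mixing-time definition once, and obtains $\norm{\rho_z - \fkp_\pi}_\textup{TV} \leq 1 - \tfrac{3}{4}\gamma^\tau$ per starting state. You instead bound $\norm{\Delta_{z,z'}}_\textup{TV}$ by the discounted sum of pairwise kernel distances $\bar d(t)$ and upgrade the single estimate $\bar d(\tau) \leq 1/2$ to geometric decay $\bar d(t) \leq (1/2)^{\floor{t/\tau}}$ via submultiplicativity, which after blocking gives $\norm{\Delta}_\textup{TV} \leq (1-\gamma^\tau)/(1-\gamma^\tau/2)$. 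Your bound is strictly stronger: it tends to $0$ as $\gamma \to 1$, whereas the paper's bound, once doubled by the triangle inequality, tends to $1/2$ and therefore can never certify $\epsilon < 1/2$. (Indeed, the paper's last step silently drops a factor of two: $1 - \tfrac{3}{4}\gamma^\tau \leq \epsilon$ under the stated condition, not $\leq \epsilon/2$ as claimed, so the paper's own argument only delivers $\norm{\Delta}_\textup{TV} \leq 2\epsilon$. Your route avoids this.)

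One quantitative caveat at your final step: inverting $(1-\gamma^\tau)/(1-\gamma^\tau/2) \leq \epsilon$ gives the threshold $\gamma^\tau \geq 2(1-\epsilon)/(2-\epsilon)$, and the stated constant $4(1-\epsilon)/3$ dominates this only when $\epsilon \leq 1/2$ (the comparison reduces to $(2\epsilon-1)(\epsilon-1) \geq 0$). So for $\epsilon \in (1/2,1)$ the literal constant in \eqref{E:tau_bound} does not fall out of your bound, and you should either state your own (slightly different, and for $\epsilon\le 1/2$ weaker-premised) threshold or restrict to $\epsilon \leq 1/2$. Relatedly, the hypothesis $\epsilon > 1/4$ plays no role in your argument\textemdash your threshold $2(1-\epsilon)/(2-\epsilon)$ is below $1$ for every $\epsilon > 0$\textemdash whereas in the paper it is exactly what makes $4(1-\epsilon)/3 < 1$. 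These are bookkeeping issues, not conceptual ones; the submultiplicativity step you identify as the crux is standard for the time-homogeneous chain induced by a fixed policy and is applied correctly.
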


\begin{proof}
	Using the triangle inequality, we obtain that
	\begin{equation}\label{E:triangle}
		\norm{\Delta}_\text{TV} \leq \norm{\rho_{z} - \fkp}_\text{TV} + \norm{\rho_{z^\prime} - \fkp}_\text{TV}
			\text{,}
	\end{equation}
	where~$\fkp$ is the stationary distribution of the MDP. Note that, as with the occupation measure, we have omitted its dependence on the policy~$\pi$. Since the ergodic MDP assumption, $\fkp$ is independent of the initial state, so that using the definition of occupation measure from~\eqref{E:occupation} we get
	%
		\begin{align}
			\rho_{z}(s) &- \fkp(s) = (1-\gamma) \sum_{t = 0}^{\tau-1} \gamma^t \left[ p_\pi(s_t = s \mid s_0 = z) - \fkp(s) \right]
			\nonumber \\
			{}&+ (1-\gamma) \sum_{t = \tau}^\infty \gamma^t \left[ p_\pi(s_t = s \mid s_0 = z) - \fkp(s) \right]
				\text{.} \label{E:signed_meas}
		\end{align}
	%
	Using the triangle inequality, we bound~\eqref{E:signed_meas} by
	%
		\begin{align}
			&\|\rho_{z} - \fkp\|_\text{TV} \leq (1-\gamma) \sum_{t = 0}^{\tau-1} \gamma^t
				\norm{p_\pi(s_t = s \mid s_0 = z) - \fkp(s)}_\text{TV}
			\nonumber\\
			{}&+ (1-\gamma) \sum_{t = \tau}^\infty \gamma^t \norm{p_\pi(s_t = s \mid s_0 = z) - \fkp(s)}_\text{TV}
				\text{.}\label{E:signed_meas_tv}
		\end{align}
	%
	The first term in~\eqref{E:signed_meas_tv} can be bounded using the fact that~$\norm{p_\pi(s_t = s \mid s_0 = z) - \fkp(s)}_\text{TV} \leq 1$ and the second term, using the definition of mixing time. Evaluating the resulting series then yields
	\begin{equation}\label{E:stationary_bound}
		\norm{\rho_{z} - \fkp}_\text{TV} \leq 1 - \frac{3}{4} \gamma^\tau
			\text{,}
	\end{equation}
	which holds for all initial states~$z$. To conclude, notice that using the discount factor from~\eqref{E:tau_bound} in~\eqref{E:stationary_bound} yields~$\norm{\rho_{z} - \fkp}_\text{TV} \leq \epsilon/2$ for all~$z$, which together with~\eqref{E:triangle} implies that~$\norm{\Delta}_\text{TV} \leq \epsilon$.
\end{proof}

Proposition~\ref{T:tau_bound} establishes a relation between the discount factor and the mixing time of the MDP in order to guarantee that estimates of value functions and their gradients have a given correlation. The longer the mixing time, i.e., the slower the MDP approaches its stationary distribution, the larger the discount factor has to be for the inner products in~\eqref{E:inner_products} to be large. For instance, if the mixing time is~$\tau = 50$, $\gamma \approx 0.99$ in order to get~$\norm{\Delta}_\text{TV} \leq 0.5$. This implies an exponential window of roughly~$100$ steps or twice the mixing time.

However theoretically appealing, mixing times are hard to estimate in practice except in very particular cases. Hence, we next consider discrete, finite MDPs for which a bound can be obtained based on spectral properties of the transition probability matrix. In what follows, we consider an MDP to be reversible with respect to its stationary distribution~$\fkp_\pi$ if~$\fkp_\pi(z) p_\pi(s_t = z^\prime \mid s_0 = z) = \fkp_\pi(z^\prime) p_\pi(s_t = z \mid s_0 = z^\prime)$ for all allowed policies~$\pi$.

\begin{proposition}\label{T:lambda_bound}
	Consider a discrete, ergodic, reversible MDP. Let~$\fkp_\text{min} = \min_{s \in \calS, \pi} \fkp_\pi(s)$ be the smallest value of the stationary distribution~$\fkp_\pi$ achieved under any policy~$\pi$ and~$\lambda_\star < 1$ be an upper bound on the value of the second largest eigenvalue of the transition probability matrix~$P = [p_\pi(s_t = j \mid s_0 = i)]_{ij}$ for any policy~$\pi$. Then,
	\begin{equation}\label{E:lambda_bound}
		\gamma \geq \frac{1-\fkp_\text{min} \epsilon}{1-\lambda_\star \fkp_\text{min} \epsilon}
			\Rightarrow \norm{\Delta}_\textup{TV} \leq \epsilon
			\text{.}
	\end{equation}
\end{proposition}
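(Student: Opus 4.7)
The plan is to mirror the structure of the proof of Proposition~\ref{T:tau_bound}, replacing the mixing-time bound on the transient TV distance with a spectral one tailored to reversible chains. The starting point is the same triangle inequality $\norm{\Delta_{z,z^\prime}}_\textup{TV} \leq \norm{\rho_z - \fkp}_\textup{TV} + \norm{\rho_{z^\prime} - \fkp}_\textup{TV}$ combined with the convexity of the TV norm applied to the definition~\eqref{E:occupation} of $\rho_z$, yielding $\norm{\rho_z - \fkp}_\textup{TV} \leq (1-\gamma) \sum_{t \geq 0} \gamma^t \norm{p_\pi(s_t = \cdot \mid s_0 = z) - \fkp}_\textup{TV}$. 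Unlike in Proposition~\ref{T:tau_bound}, no splitting of the series at the mixing time will be needed.

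The key new ingredient is the classical $L^2(\fkp)$ spectral bound for reversible ergodic chains, obtained from Cauchy--Schwarz applied to the chi-squared divergence together with the spectral decomposition of the self-adjoint transition operator in $L^2(\fkp)$: $\norm{p_\pi(s_t = \cdot \mid s_0 = z) - \fkp}_\textup{TV} \leq \tfrac{1}{2} \lambda_\star^t \sqrt{(1-\fkp(z))/\fkp(z)}$. I would then weaken the $z$-dependent prefactor using $\fkp(z) \geq \fkp_\text{min}$ together with the crude inequality $1/\sqrt{\fkp_\text{min}} \leq 1/\fkp_\text{min}$ (valid since $\fkp_\text{min} \leq 1$) to obtain the uniform bound $\tfrac{1}{2} \lambda_\star^t / \fkp_\text{min}$. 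The worst-case definitions of $\lambda_\star$ and $\fkp_\text{min}$ over policies ensure this bound holds uniformly in $\pi$.

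Substituting this pointwise bound and summing the resulting geometric series in $\gamma \lambda_\star$ gives $\norm{\rho_z - \fkp}_\textup{TV} \leq (1-\gamma) / [2\fkp_\text{min}(1-\gamma \lambda_\star)]$, and hence $\norm{\Delta}_\textup{TV} \leq (1-\gamma)/[\fkp_\text{min}(1-\gamma \lambda_\star)]$ via the triangle inequality above. Imposing that this last quantity be at most $\epsilon$ and solving the resulting affine inequality for $\gamma$ by cross-multiplication then recovers exactly the threshold in~\eqref{E:lambda_bound}.

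The main obstacle is not the algebra, which essentially parallels Proposition~\ref{T:tau_bound}, but rather choosing the right spectral TV bound so that the right-hand side of~\eqref{E:lambda_bound} involves $\fkp_\text{min}$ rather than $\sqrt{\fkp_\text{min}}$; the deliberately loose $\lambda_\star^t / (2\fkp_\text{min})$ is exactly what produces this clean form. Care is also needed to ensure that $\lambda_\star$ uniformly upper-bounds the second eigenvalue of $P$ across all admissible policies, since both $P$ and its stationary distribution $\fkp_\pi$ depend on~$\pi$; this is precisely what the worst-case hypotheses of the proposition encode.
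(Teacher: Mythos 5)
Your proposal is correct in outline and lands on exactly the same intermediate estimate as the paper, namely $\norm{\rho_z - \fkp}_\textup{TV} \leq \tfrac{1}{2}(1-\gamma)/[\fkp_\text{min}(1-\lambda_\star\gamma)]$, from which the threshold in~\eqref{E:lambda_bound} follows by the same cross-multiplication. The route differs only in the order of operations: you bound each transient distance $\norm{p_\pi(s_t=\cdot \mid s_0=z) - \fkp}_\textup{TV}$ by the off-the-shelf $L^2(\fkp)$ spectral bound and then sum the discounted geometric series, whereas the paper expands $\rho_z - \fkp$ directly in the Perron--Frobenius eigenbasis, exchanges the sums over $t$ and over eigenvalues first (obtaining the factors $(1-\gamma)/(1-\lambda_i\gamma)$ in~\eqref{E:signed_meas_perron2}), and only then applies Cauchy--Schwarz and the $\fkp$-orthonormality of the eigenfunctions. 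Both arguments lose the same $1/\fkp_\text{min}$ through the prefactor $\sqrt{\fkp(s)/\fkp(z)}$, and your deliberate crude step $1/\sqrt{\fkp_\text{min}} \leq 1/\fkp_\text{min}$ is harmless. Your approach is more modular, since the per-step bound is a textbook result; the paper's buys a slightly sharper handling of the spectrum, as explained next.

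The one substantive caveat is the meaning of $\lambda_\star$. The classical bound you invoke, $\norm{P^t(z,\cdot) - \fkp}_\textup{TV} \leq \tfrac{1}{2}\lambda_\star^t\sqrt{(1-\fkp(z))/\fkp(z)}$, holds when $\lambda_\star$ dominates $\max_{i\geq 2}\abs{\lambda_i}$, i.e., the \emph{absolute} values of the non-unit eigenvalues. The proposition (and the paper's proof) only assumes $\lambda_\star$ upper-bounds the second largest \emph{signed} eigenvalue; for a nearly periodic reversible chain with an eigenvalue close to $-1$ your per-step bound fails with that weaker $\lambda_\star$, while the paper's argument survives because $(1-\gamma)/(1-\lambda_i\gamma) \leq (1-\gamma)/(1-\lambda_\star\gamma)$ holds for negative $\lambda_i$ as well, so the discounted summation tames the oscillatory modes automatically. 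To make your proof airtight you should either redefine $\lambda_\star$ as a bound on the absolute spectral radius restricted to the non-unit eigenvalues (a mild strengthening of the hypothesis), or split off the negative eigenvalues and observe that their contribution to the discounted sum is no larger.
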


\begin{proof}
	We start using the triangle inequality as in~\eqref{E:triangle} to once again reduce the problem to bounding~$\norm{\rho_{z} - \fkp}_\text{TV}$, where~$\fkp$ is the stationary distribution of the MDP. Note, once more, that since the MDP is ergodic, $\fkp$ is independent of the initial state. Together with the reversibility of the MDP, the Perron-Frobenius theorem implies that the transition probability kernel can be decomposed as
	\begin{equation}\label{key}
		p(s_t = s \mid s_0 = z, \pi) = \fkp(s) \left[ 1 + \sum_{i = 2}^n \lambda_i^t q_i(z) q_i(s) \right]
	\end{equation}
	where the~$\{q_i\}$ form an orthonormal basis of~$\setR^n$ with respect to the inner product~$\langle q,r \rangle_\fkp = \sum_{s \in \calS} q(s) r(s) \fkp(s)$~\cite[Lemma~12.2]{Levin17m}. Hence, it holds that
	\begin{equation}\label{E:signed_meas_perron}
		\rho_{z}(s) - \fkp(s) = (1-\gamma) \sum_{t = 0}^{\infty} \gamma^t
			\sum_{i = 2}^n \lambda_i^t q_i(z) q_i(s) \fkp(s)
			\text{.}
	\end{equation}
	Since both series are convergent~(by hypothesis, otherwise~$\rho_{z}(s)$ would not be a proper probability measure), the summations in~\eqref{E:signed_meas_perron} can be rearranged to get
	\begin{equation}\label{E:signed_meas_perron2}
	\begin{aligned}
		\rho_{z}(s) - \fkp(s) &= (1-\gamma) \sum_{i = 2}^n \sum_{t = 0}^{\infty} (\lambda_i \gamma)^t q_i(z) q_i(s) \fkp(s)
		\\
		{}&= \sum_{i = 2}^n \left( \frac{1-\gamma}{1 - \lambda_i \gamma} \right) q_i(z) q_i(s) \fkp(s)
			\text{.}
	\end{aligned}
	\end{equation}
	We now proceed to evaluate the total variation norm of~\eqref{E:signed_meas_perron2}. To do so, recall that the total variation norm is essentially the~$\ell_1$-norm, up to a~$1/2$ scaling. Hence, we begin by bounding the absolute value of~\eqref{E:signed_meas_perron2} as in
	\begin{equation*}
		\abs{\rho_{z}(s) - \fkp(s)} \leq
			\left( \frac{1-\gamma}{1 - \lambda_\star \gamma} \right) \fkp(s) \abs{\sum_{i = 2}^n q_i(z) q_i(s)}
			\text{,}
	\end{equation*}
	where~$\lambda_\star < 1$ is an upper bound on the value of the second largest eigenvalue of the transition probability matrix. Using Cauchy-Schwartz then yields
	\begin{multline*}
		\abs{\rho_{z}(s) - \fkp(s)} \leq{}
		\\
		\left( \frac{1-\gamma}{1 - \lambda_\star \gamma} \right) \fkp(s)
		\left[ \sum_{i = 2}^n q_i(z)^2 \sum_{i = 2}^n q_i(s)^2 \right]^{1/2}
			\text{.}
	\end{multline*}
	which can be rearranged as
	\begin{multline}\label{E:signed_meas_perron_abs}
		\abs{\rho_{z}(s) - \fkp(s)} \leq \left( \frac{1-\gamma}{1 - \lambda_\star \gamma} \right) \sqrt{\frac{\fkp(s)}{\fkp(z)}}
			\times{}
		\\
		{}\left[ \sum_{i = 2}^n q_i(z)^2 \fkp(z) \right]^{1/2} \left[ \sum_{i = 2}^n q_i(s)^2 \fkp(s) \right]^{1/2}
			\text{.}
	\end{multline}
	Recalling that the~$q_i$ are orthonormal with respect to the inner product~$\langle \cdot,\cdot \rangle_\fkp$, i.e., $\sum_{i = 1}^n q_i(s)^2 \fkp(s) = 1$, we get that
	\begin{equation}\label{E:signed_meas_perron_abs2}
		\abs{\rho_{z}(s) - \fkp(s)} \leq \left( \frac{1-\gamma}{1 - \lambda_\star \gamma} \right) \sqrt{\frac{\fkp(s)}{\fkp(z)}}
			\text{.}
	\end{equation}
	The desired total variation norm then evaluates to
	\begin{align*}
		\norm{\rho_{z} - \fkp}_\text{TV} &= \frac{1}{2} \sum_{s \in \calS} \abs{\rho_{z}(s) - \fkp(s)}
		\\
		{}&\leq \frac{1}{2} \left( \frac{1-\gamma}{1 - \lambda_\star \gamma} \right)
			\sum_{s \in \calS} \fkp(s) \frac{1}{\sqrt{\fkp(s) \fkp(z)}}
			\text{.}
	\end{align*}
	Using H\"{o}lder's inequality and the fact that~$\sum_{s \in \calS} \fkp(s) = 1$ and~$\fkp_\text{min} \leq \sqrt{\fkp(s) \fkp(z)}$ for any~$s,z \in \calS$ yields
	\begin{equation}\label{E:signed_meas_perron_tv}
		\norm{\rho_{z} - \fkp}_\text{TV} \leq \left( \frac{1-\gamma}{1 - \lambda_\star \gamma} \right)
			\frac{1}{2 \fkp_\text{min}}
			\text{.}
	\end{equation}
	For~$\gamma$ as in~\eqref{E:lambda_bound}, \eqref{E:signed_meas_perron_tv} yields~$\norm{\rho_{z} - \fkp}_\text{TV} \leq \epsilon/2$ for all~$z \in \calS$, which together with~\eqref{E:triangle} yields the desired result.
\end{proof}

Proposition~\ref{T:lambda_bound} provides a bound on the discount factor~$\gamma$ as a function of the stationary distribution of the MDP and spectral properties of its probability kernel. Assuming that~$\fkp_\text{min}$ is not too small, which is the case, e.g., when exploring the MDP using a Gaussian policy with moderate variance, the correlation between the value functions and its gradients is dominated by the rate of the slowest decaying mode of the MDP, i.e., $\lambda_\star$. Hence, if the MDP has fast decaying modes, which will typically be the case for exploration policies that have large randomness, then the value of the inner products in~\eqref{E:inner_products} will be large. Ultimately, this result is to be expected given that the relaxation time, i.e., $1/\lambda_\star$, is closely related to the mixing time of lazy Markov chains~\cite{Levin17m}. Still, despite its dependence on~$\fkp_\text{min}$ which can be quite small, the rate of~\eqref{E:lambda_bound} is more favorable than that of~\eqref{E:tau_bound}.

Before proceeding, we note that as quantitative results, neither propositions in this section are fully satisfactory. One relies on mixing times, which are hard to estimate, while the other relies on the minimum state occupation probability, which can be very close to zero, e.g., in safety applications where parts of the state space must be avoided by design. Nevertheless, they provide intuition as to when we can expect the value functions and their gradients to be correlated when starting from different points of the state space. More importantly, we have shown that the estimates of Algorithm~\ref{algorithm} provide primal ascent and dual descent directions for the Lagrangian~\eqref{eqn_lagrangian} of the constrained reinforcement learning problem \eqref{P:parametric}. This result can be utilized, together with the strong duality of the problem~\cite{Paternain19c}, to provide guarantees on the convergence of the continuing task Algorithm~\ref{algorithm}, by following an analysis along the lines of \cite{Paternain19s}.

\section{Numerical Results}
\label{S:sims}

\begin{figure}[t]
	\centering
	\includegraphics[scale=1]{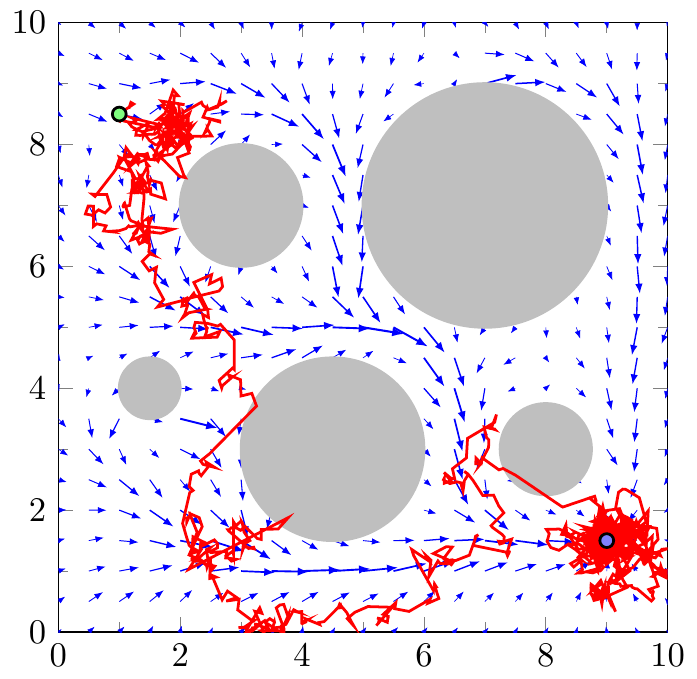} 	
	\caption{Online trajectory followed by the agent (red) and optimal offline safe navigation policy (blue). The agent starts from the position $(1,8.5)$ and navigates to the goal $(9,1.5)$ while avoiding the obstacles. A total of $2{,}000$ time steps are illustrated.}
	\label{fig:quiver}
\end{figure}

In the previous sections, we have proposed an approach to learn safely in continuing task reinforcement learning. Now, we dedicate this section to its numerical evaluation. We consider a continuous navigation problem in which an agent must traverse an environment populated with obstacles. The agent does not have any a priori knowledge of the environment and utilizes the proposed approach to navigate to a goal. The problem scenario is illustrated in Fig. \ref{fig:quiver}. Formally, we consider a MDP with state space of $\mathcal{S}=[0,10] \times [0,10]$, where the state of the agent at time $s_t \in \mathcal{S}$ is given by its position on the x- and y-axis. The dynamics of the MDP are given by the state evolution $s_{t+1}=s_{t}+T_s a_{t}$, where $T_s$ corresponds to the sampling time of the dynamics, selected to be $T_s=0.05$. Furthermore, the actions $a_t$ of the agents are governed by a Gaussian policy following
\begin{align}
\pi_\theta(a|s)=\frac{1}{\sqrt{(2 \pi)^d |\Sigma|}} 
e^{-\frac{1}{2}
\bigl(a - \mu_\theta(s) \bigr)^\top \Sigma^{-1} \bigl(a - \mu_\theta(s) \bigr)
 },
\end{align}
with covariance $\Sigma= \diag \left( 0.5,0.5\right)$. The mean of the distribution $\mu_\theta(s)$ is given by a function approximator, corresponding to a linear combination of radial basis functions
\begin{align}
\mu_\theta(s)=\sum_{i=1}^{d}  \theta_i 	\exp \left(-\frac{\|s - \bar{s}_i \|^2}{2\sigma^2} \right)
\end{align}
where $\sigma$ is the bandwidth of each kernel centered at $\bar{s}_i$ and $\theta=[\theta_1,\ldots,\theta_d]^\top$ is the vector of parameters to be learned. This linear basis is constructed by spacing at $0.25$ units each kernel of bandwidth $\sigma=0.5$. The agent attempts to complete a navigation task, for which is receives a reward $r(s,a)=-\|s - s_{\texttt{GOAL}}\|^2$, where $s_{\texttt{GOAL}}=(9,1)$ and the unsafe region corresponds to the circular obstacles shown in Fig. \ref{fig:quiver}. We follow the primal-dual approach of Algorithm \ref{algorithm}, where the required safety level is $1-\delta=0.99$, the discount factor is $\gamma=0.95$ and the step sizes are chosen to be $\eta_{\theta}=0.01$ and $\eta_{\lambda}=0.005$. In order to induce safe exploration, we initialize the dual variable to a large value of $\lambda_0=20$.

First, observe in Fig. \ref{fig:quiver} the trajectory taken by the agent. This figure shows in red the online trajectory taken by the agent, shown in contrast to the optimal policy that can be learned offline (blue). The agent manages to navigate to its goal while mostly avoiding the obstacles. However, notice that the online trajectory it follows differs from the optimal trajectory that an agent trained offline would take. This is due to the lack of a priori knowledge of the map and the randomness of the agent's policy, which it utilizes for exploration. Furthermore, notice that the agent is conservative in approaching the obstacles. Due to the large starting value of $\lambda$, when the agent enters an unsafe region, it gets pulled away considerably from that area. The occurrence of these unsafe events can be evaluated more clearly in Figure \ref{fig:safety}.

\begin{figure}[t]
	\centering
	\includegraphics[scale=1]{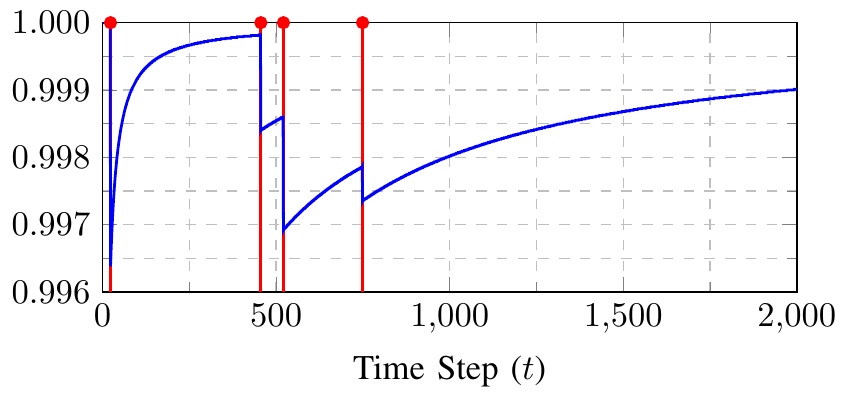} 
	\caption{Runtime safety level (blue) and unsafe events (red). The runtime safety is $\frac{1}{t}\sum_{l=0}^{t-1} \mathds{1} \bigl( s_{l} \in \ccalS_{\texttt{safe}} \bigr)$, and the unsafe events correspond to time instances in which $s_{t} \notin \ccalS_{\texttt{safe}}$. The demanded safety level is $1-\delta=0.99$.}
	\label{fig:safety}
\end{figure}

Figure \ref{fig:safety} shows the runtime safety of the agent. We also show the unsafe events, which are the instances of time in which the agents traverses unsafe regions. Overall, recall that the safety demanded of the system is $1-\delta=0.99$, and the system stays above that prespecified safety level during all its operating time, reaching the goal after around $\sim750$ time steps. Thus, the agents is capable of performing the required task in an online manner while remaining safe. Furthermore, as previously seen in Fig. \ref{fig:quiver}, unsafe events are rare due to the large value of the initial dual variable $\lambda$. These preliminary numerical results point towards the possibility of using primal-dual algorithms not only for learning safe policies but also to achieve safe exploration.

\section{Conclusions}
\label{S:conclusions}

In RL problems, two main obstacles prevent us from learning safely. First, it has to be possible to learn in an online manner, without the need of system restarts. Secondly, the system rollouts need to also be safe. In this work, we have taken a preliminary step into attaining safe continuing task reinforcement learning by providing guarantees and algorithmic details for continuous task primal-dual approaches. Thus, we have cleared one of the two major hurdles to learn safely. Furthermore, preliminary numerical results have illustrated that the primal-dual reinforcement learning approach taken in this work can potentially be used to learn safely.

\bibliographystyle{IEEEtran}
\bibliography{aux_files/safe,aux_files/IEEEabrv,aux_files/af,aux_files/bayes,aux_files/control,aux_files/gsp,aux_files/math,aux_files/ml,aux_files/rkhs,aux_files/rl,aux_files/sp,aux_files/stat}

\end{document}